\documentclass[11pt]{article}
\usepackage[T1]{fontenc}
\usepackage[utf8]{inputenc}
\IfFileExists{lmodern.sty}{\usepackage{lmodern}}{}
\usepackage{amsmath,amssymb,amsthm,mathtools}
\usepackage[numbers,sort&compress]{natbib}
\usepackage[margin=1in]{geometry}
\usepackage{booktabs}
\usepackage[hidelinks]{hyperref}
\usepackage{xurl}

\theoremstyle{plain}
\newtheorem{theorem}{Theorem}[section]
\newtheorem{proposition}[theorem]{Proposition}
\newtheorem{lemma}[theorem]{Lemma}
\newtheorem{corollary}[theorem]{Corollary}
\theoremstyle{definition}
\newtheorem{definition}[theorem]{Definition}

\theoremstyle{remark}
\newtheorem{remark}[theorem]{Remark}

\newcommand{\pos}[1]{\left[#1\right]_+}
\newcommand{\HD}{H_D'}
\newcommand{\HC}{H_C'}
\newcommand{\KCDI}{\mathrm{KCDI}}
\newcommand{\KJI}{\mathrm{KJI}}
\newcommand{\Abar}{\bar{A}}
\newcommand{\dtv}{d_{\mathrm{TV}}}
\newcommand{\E}{\mathbb{E}}
\newcommand{\Var}{\operatorname{Var}}
\newcommand{\pstar}{p^{*}}

\title{Bounds, Decompositions and Null Behaviour of KRATOS:\\
A Mathematical Specification of a Recognition-Comparability Diagnostic}

\author{%
María Dolores González Barbado\thanks{Corresponding author: \texttt{mgonzalezbarbado@uoc.edu}. ORCID: 0000-0002-4213-2090.}\\
\small Interuniversity Doctoral Programme in Tourism,\\
\small Universitat Oberta de Catalunya (UOC), Barcelona, Spain
\and
Alberto Barbado González\thanks{ORCID: 0000-0002-5123-5243.}\\
\small Escuela Superior de Ingeniería y Tecnología,\\
\small Universidad Internacional de La Rioja (UNIR), Logroño, Spain}
\date{}

\begin{document}
\maketitle

\begin{abstract}
KRATOS is a group-structured bibliometric diagnostic that compares the distribution of documents (participation) with the distribution of citation weight (recognition) over a fixed, finite universe of analytical groups. This note gives its complete mathematical specification and derives the properties that govern its interpretation. Beyond bounds and equality conditions for each component, we show that the recognition-alignment score depends on citation data only through ratios of group mean citation rates to the corpus mean; that the composition ceiling of the participation--recognition factor is an affine function of the total variation distance to the uniform reference; and that the factor itself satisfies Fréchet-type bounds in terms of this ceiling and a participation-weighted recognition score. We establish an exact logarithmic decomposition of the composite index, an ordering between the primary and a reciprocal-symmetric recognition score, invariance properties, and closed-form first and second moments of the recognition ratios under global and stratified permutation nulls. These results explain why composite orderings can be sensitive to small groups, heavy-tailed citation distributions and metadata reassignment. All results are verified numerically with an accompanying script. The specification concerns measurement structure only; it does not define a measure of epistemic change or justice.
\end{abstract}

\noindent\textbf{Keywords:} bibliometric indicators; Shannon evenness; composite indicators; total variation distance; permutation tests; citation distributions

\noindent\textbf{MSC 2020:} 62P25; 94A17 \quad \textbf{ACM CCS:} Information systems $\to$ Digital libraries and archives

% =====================================================================
\section{Introduction}\label{sec:intro}
% =====================================================================

Comparing citation-based recognition across predefined groups requires two distributions over the same finite set of groups: the share of documents each group contributes and the share of citation weight those documents receive. KRATOS (Knowledge Representation and Analysis for Transdisciplinary Oversight of Science) maps this pair of distributions to a small set of scalar quantities: two evenness components, their weighted geometric combination (KCDI), a participation--recognition factor $P$, and the composite $\KJI=\KCDI\times P$. Its empirical application to four bibliographic corpora, together with conditional permutation calibration and sensitivity analyses, is reported in a companion manuscript \citep{gonzalezbarbado2026kratos}; the frozen implementation is archived at \citet{kratos2026zenodo}.

The purpose of this note is narrower and formal. Composite indicators are routinely reported without a statement of what their components can and cannot jointly attain, how they respond to changes in their inputs, or how they behave under the resampling schemes used to calibrate them \citep{oecd2008composite}. For KRATOS these questions are not cosmetic: the companion study finds that cross-corpus orderings of KJI change under metadata reassignment and resampling, and an explanation of that sensitivity has to come from the structure of the index itself.

The note makes four contributions. First, it fixes notation and boundary conventions so that the mathematical object and its software implementation coincide (Sections~\ref{sec:notation} and~\ref{sec:computation}). Second, it characterises the participation--recognition factor: the recognition score is a function of relative mean citation rates (Proposition~\ref{prop:rate}); the composition ceiling is an affine transform of a total variation distance (Proposition~\ref{prop:abar-tv}); and the factor is bounded above and below by a ceiling term and a recognition term (Theorem~\ref{thm:frechet}). Third, it gives an exact logarithmic decomposition of KJI and an ordering between alternative recognition scores (Section~\ref{sec:kji}). Fourth, it derives closed-form null moments under global and stratified permutation (Section~\ref{sec:null}), which make explicit why null dispersion grows for small groups and heavy-tailed citation weights.

None of the individual building blocks is new. Normalised Shannon entropy is Pielou's evenness \citep{shannon1948communication,pielou1966measurement}, its use for diversity in science is well established \citep{stirling2007diversity}, and the moment formulas rest on classical sampling theory \citep{cochran1977sampling}. The contribution lies in the joint specification and in the properties of the composite that follow from it.

% =====================================================================
\section{Domain and notation}\label{sec:notation}
% =====================================================================

Let $D$ be a canonical bibliographic corpus with $N=|D|$ documents. Each document is either assigned to exactly one group of a fixed analytical universe $U=\{1,\dots,G\}$, $G\geq 2$, or retained in an \emph{audit state} when the metadata needed for assignment are unresolved. Let $D_R\subseteq D$ be the resolved set, $N_R=|D_R|$, and define coverage $\kappa=N_R/N$ for $N>0$. Audit states are not additional groups: they never enter $U$, and every quantity below is computed on $D_R$ only. Groups of $U$ with no resolved documents remain in $U$.

For $u\in U$, let $n_u$ be the number of resolved documents assigned to $u$. Each resolved document $d_i$ carries a citation weight $x_i\geq 0$ (raw citation counts in the primary specification). Define
\[
c_u=\sum_{i:\,d_i\in u}x_i,\qquad C_R=\sum_{u\in U}c_u .
\]
For $N_R>0$ and $C_R>0$, the participation and recognition vectors are
\[
p_u=\frac{n_u}{N_R},\qquad s_u=\frac{c_u}{C_R},\qquad p,s\in\Delta_G=\Bigl\{z\in\mathbb{R}^G_{\geq 0}:\textstyle\sum_u z_u=1\Bigr\}.
\]
Results stated for $p,s\in\Delta_G$ hold on the whole simplex; in applications $p$ lies on the lattice $\{n/N_R\}$, whereas any $s\in\Delta_G$ is realisable by a suitable allocation of non-negative weights. The participation reference is the uniform share $\pstar=1/G$ and $e=(\pstar,\dots,\pstar)$ denotes the uniform vector. For $z,z'\in\Delta_G$, $\dtv(z,z')=\tfrac12\sum_u|z_u-z'_u|$ is the total variation distance. Finally, for $n_u>0$ we write $\bar x_u=c_u/n_u$ for the mean citation weight of group $u$ and $\bar x=C_R/N_R$ for the corpus mean, and $[a]_+=\max(0,a)$.

% =====================================================================
\section{Distributional evenness}\label{sec:evenness}
% =====================================================================

\begin{definition}[Evenness]\label{def:evenness}
For $z\in\Delta_G$, let $H'(z)=-(\ln G)^{-1}\sum_{u\in U}z_u\ln z_u$ with $0\ln 0=0$. Document- and citation-distribution evenness are $\HD=H'(p)$ and, for $C_R>0$, $\HC=H'(s)$.
\end{definition}

$H'$ is Pielou's evenness of a distribution over $G$ categories \citep{pielou1966measurement}.

\begin{proposition}[Range and extremal configurations]\label{prop:entropy}
For every $z\in\Delta_G$, $0\leq H'(z)\leq 1$. The value $0$ is attained exactly at the vertices of $\Delta_G$ and the value $1$ exactly at $e$. $H'$ is invariant under permutations of the coordinates.
\end{proposition}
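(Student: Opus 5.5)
The plan is to treat the lower bound, the upper bound and permutation invariance separately, working throughout with the unnormalised entropy $H(z)=-\sum_u z_u\ln z_u$, so that $H'(z)=H(z)/\ln G$ with $\ln G>0$ because $G\ge 2$.

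\emph{Lower bound and its equality case.} Each term $-z_u\ln z_u$ is non-negative for $z_u\in[0,1]$, with the convention $0\ln 0=0$. It vanishes exactly when $z_u\in\{0,1\}$. Hence $H(z)\ge 0$, and $H(z)=0$ if and only if every coordinate lies in $\{0,1\}$. Together with $\sum_u z_u=1$, this forces exactly one coordinate to equal $1$, so $z$ is a vertex of $\Delta_G$.

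\emph{Upper bound and its equality case.} Let $S=\{u:z_u>0\}$ be the support of $z$. I would apply Jensen's inequality to the strictly concave function $\ln$, using the weights $z_u$ on $S$:
\[
H(z)=\sum_{u\in S}z_u\ln\frac{1}{z_u}\le \ln\Bigl(\sum_{u\in S}z_u\cdot\frac{1}{z_u}\Bigr)=\ln|S|\le \ln G .
\]
The first inequality is an equality if and only if $1/z_u$ is constant on $S$. The second is an equality if and only if $|S|=G$. So $H(z)=\ln G$ holds exactly when $z_u=1/G$ for all $u$, that is, when $z=e$. Dividing by $\ln G$ then gives $0\le H'(z)\le 1$, with the stated extremal configurations.

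\emph{Permutation invariance.} This is immediate, because $H$ is a sum of the same function applied to each coordinate.

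The only delicate point is the equality case in Jensen's inequality when some coordinates vanish. Restricting the sum to the support $S$ handles this, and the check $|S|=G$ is what excludes non-uniform distributions supported on fewer groups. As an alternative, the upper bound can be obtained from Gibbs' inequality, $\sum_u z_u\ln(z_u/(1/G))\ge 0$, with equality if and only if $z=e$.
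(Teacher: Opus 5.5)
Your proof is correct. For the lower bound and for permutation invariance it coincides with the paper's argument.

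The upper bound is where the routes differ. The paper appeals to the strict concavity and symmetry of Shannon entropy on $\Delta_G$, citing Cover and Thomas, to conclude that $e$ is the unique maximiser. The reasoning implicit in that citation is that averaging any $z$ over all coordinate permutations yields $e$. Concavity then gives $H(e)\geq H(z)$, and strictness forces equality only when $z=e$. You instead apply Jensen's inequality to $\ln$ with weights $z_u$ on the support $S$.

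Your version is self-contained and handles vanishing coordinates explicitly through the restriction to $S$. It also gives the sharper intermediate bound $H'(z)\leq\ln|S|/\ln G$. That bound quantifies Remark~\ref{rem:G}: a distribution occupying only $k$ of the $G$ groups has evenness at most $\ln k/\ln G$. It is attained, for instance, by $(\tfrac12,\tfrac12,0,\dots,0)$ in Proposition~\ref{prop:nonmonotone}. The paper's route is shorter but delegates the uniqueness of the maximiser to a standard reference.
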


\begin{proof}
Each term $-z_u\ln z_u$ is non-negative on $[0,1]$ and vanishes only at $z_u\in\{0,1\}$; on $\Delta_G$ all terms vanish exactly at a vertex. Shannon entropy is strictly concave and symmetric on $\Delta_G$ and attains its unique maximum $\ln G$ at $e$ \citep{cover2006elements}. Dividing by $\ln G>0$ gives the claim.
\end{proof}

\begin{remark}\label{rem:G}
The normalisation fixes the meaning of $H'$ relative to $G$. Values computed over universes of different size are not comparable merely because both lie in $[0,1]$, and empty groups affect $H'$ through $G$. Changing $U$ defines a different measurement specification.
\end{remark}

% =====================================================================
\section{Knowledge Contribution Diversity Index}\label{sec:kcdi}
% =====================================================================

\begin{definition}[KCDI]\label{def:kcdi}
For $\lambda\in[0,1]$,
\[
\KCDI_\lambda=
\begin{cases}
\HC, & \lambda=0,\\
(\HD)^{\lambda}(\HC)^{1-\lambda}, & 0<\lambda<1,\\
\HD, & \lambda=1.
\end{cases}
\]
The primary specification uses $\lambda=1/2$; $\lambda$ is fixed analytically and not estimated.
\end{definition}

\begin{proposition}[Bounds, monotonicity and limited compensability]\label{prop:kcdi}
For all $\HD,\HC\in[0,1]$ and $\lambda\in[0,1]$:
\begin{enumerate}
\item $\min(\HD,\HC)\leq\KCDI_\lambda\leq\lambda\HD+(1-\lambda)\HC\leq\max(\HD,\HC)\leq 1$;
\item for $0<\lambda<1$, $\KCDI_\lambda$ is non-decreasing in each argument and equals zero whenever either argument is zero;
\item if $\HD,\HC>0$, then
\[
\frac{\partial\ln\KCDI_\lambda}{\partial\ln\HD}=\lambda,\qquad
\frac{\partial\ln\KCDI_\lambda}{\partial\ln\HC}=1-\lambda,\qquad
\frac{\partial\ln\KCDI_\lambda}{\partial\lambda}=\ln\frac{\HD}{\HC}.
\]
\end{enumerate}
\end{proposition}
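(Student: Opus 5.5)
Write $a=\HD$ and $b=\HC$, both in $[0,1]$. I will handle the endpoint cases $\lambda\in\{0,1\}$ separately, since there Definition~\ref{def:kcdi} gives $\KCDI_\lambda=b$ or $a$. For these cases every inequality in item~1 is immediate: the middle expression also reduces to $b$ or $a$, and that value lies between $\min(a,b)$ and $\max(a,b)$. For $0<\lambda<1$ the central tool is the weighted AM--GM inequality $a^\lambda b^{1-\lambda}\le \lambda a+(1-\lambda)b$. I will obtain it from concavity of $\ln$ when $a,b>0$. When $a$ or $b$ is zero it is trivial, because the left side is then $0$ (with the convention $0^\lambda=0$ for $\lambda>0$) and the right side is non-negative.

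For the remaining links in item~1, I will use that $t\mapsto t^\lambda$ and $t\mapsto t^{1-\lambda}$ are non-decreasing on $[0,\infty)$. This gives $a^\lambda b^{1-\lambda}\ge m^\lambda m^{1-\lambda}=m$ with $m=\min(a,b)$, which also covers $m=0$. A convex combination of $a$ and $b$ is bounded above by $\max(a,b)$. Finally, $\max(a,b)\le 1$ because $H'\le 1$ by Proposition~\ref{prop:entropy}, or simply from the hypothesis $a,b\in[0,1]$.

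Item~2 follows from the same monotonicity. The product of non-negative non-decreasing factors is non-decreasing in each argument separately. If either argument is $0$, the corresponding factor $0^\lambda$ or $0^{1-\lambda}$ vanishes because both exponents are strictly positive.

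For item~3, with $a,b>0$ I will take logarithms to get $\ln\KCDI_\lambda=\lambda\ln a+(1-\lambda)\ln b$. This expression is affine in $(\ln a,\ln b,\lambda)$, so the three partial derivatives can be read off directly, the last one being $\ln a-\ln b=\ln(a/b)$. On the closed interval, the derivative in $\lambda$ should be understood one-sidedly at the endpoints. This is consistent because the formula $a^\lambda b^{1-\lambda}$ agrees with the endpoint cases when $a,b>0$.

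I expect no real obstacle. The only care needed is at the boundary: the conventions for zero arguments and for $\lambda\in\{0,1\}$, where the power form would otherwise need $0^0$. I will state these explicitly rather than appealing to continuity.
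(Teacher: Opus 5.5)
Your proposal is correct and follows essentially the same route as the paper: the endpoints are handled by direct inspection, the interior case uses weighted AM--GM together with monotonicity of $t\mapsto t^\alpha$ (including $0^\alpha=0$), and part~(iii) is read off from the identity $\ln\KCDI_\lambda=\lambda\ln\HD+(1-\lambda)\ln\HC$. Your explicit treatment of zero arguments and of the one-sided $\lambda$-derivative at the endpoints is more careful than the paper's terse proof, but it is not a different argument.
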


\begin{proof}
At the endpoints all claims are immediate. For $0<\lambda<1$ the middle inequality in (i) is the weighted arithmetic--geometric mean inequality, and the outer inequalities hold because a weighted geometric mean of two numbers lies between them. Part (ii) follows from monotonicity of $t\mapsto t^\alpha$ for $\alpha>0$ and from $0^\alpha=0$. Part (iii) follows from $\ln\KCDI_\lambda=\lambda\ln\HD+(1-\lambda)\ln\HC$.
\end{proof}

Part (i) states formally the non-compensatory character of geometric aggregation \citep{oecd2008composite}: KCDI never exceeds the corresponding arithmetic mean, and the gap widens as the components diverge. By (iii), raising $\lambda$ increases KCDI if and only if $\HD>\HC$.

\begin{proposition}[Endpoint continuity]\label{prop:continuity}
$\lambda\mapsto\KCDI_\lambda$ is right-continuous at $\lambda=0$ if and only if $\HD>0$ or $\HC=0$, and left-continuous at $\lambda=1$ if and only if $\HC>0$ or $\HD=0$.
\end{proposition}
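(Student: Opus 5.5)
Fix $\HD,\HC\in[0,1]$ and study the function $f(\lambda)=(\HD)^{\lambda}(\HC)^{1-\lambda}$ on the open interval $(0,1)$, comparing its one-sided limits with the endpoint values $\KCDI_0=\HC$ and $\KCDI_1=\HD$ fixed by Definition~\ref{def:kcdi}. Continuity at the endpoints then just asks whether $\lim_{\lambda\downarrow 0}f(\lambda)=\HC$ and $\lim_{\lambda\uparrow 1}f(\lambda)=\HD$. The two statements are mirror images under $\lambda\leftrightarrow 1-\lambda$ and $\HD\leftrightarrow\HC$, so I prove the one at $\lambda=0$ and obtain the other by this symmetry.

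For right-continuity at $0$, I split into four cases according to whether $\HD$ and $\HC$ are zero.
\begin{enumerate}
\item If $\HD>0$ and $\HC>0$, then $f(\lambda)=\exp\{\lambda\ln\HD+(1-\lambda)\ln\HC\}$, as in part (iii) of Proposition~\ref{prop:kcdi}. This is continuous on all of $\mathbb{R}$, so $f(\lambda)\to\HC$.
\item If $\HD>0$ and $\HC=0$, then $f\equiv 0$ on $(0,1)$, because $0^{1-\lambda}=0$ for $1-\lambda>0$. The limit $0$ equals $\HC$.
\item If $\HD=0$ and $\HC=0$, then $f\equiv 0$ on $(0,1)$ again, and the limit $0$ equals $\HC$.
\item If $\HD=0$ and $\HC>0$, then $f\equiv 0$ on $(0,1)$, since $0^{\lambda}=0$ for $\lambda>0$ (part (ii) of Proposition~\ref{prop:kcdi}). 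The limit is $0\neq\HC$, so continuity fails.
\end{enumerate}
Hence continuity at $0$ fails exactly when $\HD=0$ and $\HC>0$. Its negation, $\HD>0$ or $\HC=0$, is precisely the stated condition.

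Applying the symmetry gives the statement at $\lambda=1$: left-continuity fails exactly when $\HC=0$ and $\HD>0$, which is equivalent to the stated condition $\HC>0$ or $\HD=0$.

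No step is a real obstacle. The one point needing care is the convention $0^{\alpha}=0$ for $\alpha>0$, which makes $f$ identically zero on the open interval whenever either argument vanishes. The discontinuity arises only because the endpoint definition drops the factor whose exponent tends to $0$. I would also note that $0^0$ never appears, since the endpoints are defined separately.
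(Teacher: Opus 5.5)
Your proposal is correct and follows the paper's own argument. Both compute the one-sided limit of $(\HD)^{\lambda}(\HC)^{1-\lambda}$ on the open interval, find it is identically $0$ when $\HD=0$, compare it with the endpoint value $\HC$, and obtain the case $\lambda=1$ by symmetry. Your four cases only refine the paper's two-way split on whether $\HD>0$.
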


\begin{proof}
If $\HD>0$, $(\HD)^\lambda(\HC)^{1-\lambda}\to\HC$ as $\lambda\downarrow 0$, including when $\HC=0$. If $\HD=0$, the interior value is $0$ for all $\lambda>0$, so the right limit is $0$, which equals $\HC$ only if $\HC=0$. The case $\lambda\uparrow 1$ is symmetric.
\end{proof}

% =====================================================================
\section{Participation--recognition alignment}\label{sec:alignment}
% =====================================================================

\begin{definition}[Alignment scores and derived factors]\label{def:alignment}
For $u\in U$ define participation alignment and, for $C_R>0$, the recognition ratio and recognition alignment
\[
A_u=\pos{1-\frac{|p_u-\pstar|}{\pstar}}=\pos{1-|Gp_u-1|},\qquad
r_u=\frac{s_u}{p_u},\qquad
S_u=\begin{cases}\pos{1-|r_u-1|}, & p_u>0,\\ 0, & p_u=0.\end{cases}
\]
If $C_R=0$, set $S_u=0$ for all $u$. The participation--recognition factor, the composition ceiling, the participation-weighted recognition score and, for $\Abar>0$, the conditional recognition alignment are
\[
P=\frac1G\sum_{u\in U}A_uS_u,\qquad
\Abar=\frac1G\sum_{u\in U}A_u,\qquad
P_S=\sum_{u\in U}p_uS_u,\qquad
Q=\frac{P}{\Abar}=\sum_{u\in U}\frac{A_u}{\sum_v A_v}\,S_u .
\]
The complement $R=1-P$ is reported by the software for compatibility and has no separate role.
\end{definition}

$Q$ is the $A$-weighted mean of the recognition scores, so $P=\Abar\,Q$ separates a pure participation term from recognition alignment conditional on participation.

\begin{proposition}[Component scores]\label{prop:component}
For every $u$, $A_u,S_u\in[0,1]$. Moreover $A_u=1\iff p_u=\pstar$, $A_u>0\iff 0<p_u<2/G$, and for $p_u>0$, $C_R>0$,
\[
S_u=\pos{\min(r_u,\,2-r_u)}=
\begin{cases} r_u, & 0\leq r_u\leq 1,\\ 2-r_u, & 1\leq r_u\leq 2,\\ 0, & r_u\geq 2,\end{cases}
\]
so that $S_u=1\iff s_u=p_u$.
\end{proposition}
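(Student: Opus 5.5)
The plan is a direct case analysis on the definitions. Nothing deeper than the elementary identity $\pos{1-|t-1|}=\pos{\min(t,2-t)}$ for real $t$ is needed, which holds because $1-|t-1|=\min(1-(t-1),\,1+(t-1))=\min(2-t,t)$. I will use this identity twice: with $t=Gp_u$ for $A_u$, and with $t=r_u$ for $S_u$.

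First I handle the ranges. By construction $A_u=\pos{1-|Gp_u-1|}$ is non-negative and at most $1$, since $|Gp_u-1|\geq 0$. The same argument gives $S_u\in[0,1]$ when $p_u>0$ and $C_R>0$, while in the conventional cases ($p_u=0$ or $C_R=0$) we have $S_u=0\in[0,1]$.

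Next come the characterisations of $A_u$. We have $A_u=1$ if and only if $|Gp_u-1|=0$, that is, $p_u=1/G=\pstar$. Further, $A_u>0$ if and only if $|Gp_u-1|<1$, that is, $0<Gp_u<2$, which is exactly $0<p_u<2/G$. I will note that since $p_u\geq 0$ anyway, the left endpoint excludes precisely the empty groups.

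For $S_u$ with $p_u>0$ and $C_R>0$, the ratio $r_u=s_u/p_u$ is well defined and non-negative. Applying the identity gives $S_u=\pos{\min(r_u,2-r_u)}$. I then split into cases on $r_u$:
\begin{itemize}
\item if $r_u\leq 1$, the minimum is $r_u\geq 0$;
\item if $1\leq r_u\leq 2$, it is $2-r_u\geq 0$;
\item if $r_u\geq 2$, it is $2-r_u\leq 0$, so the positive part is $0$.
\end{itemize}
The overlapping boundary values $r_u=1$ and $r_u=2$ agree across cases, so the piecewise formula is consistent. Finally, $S_u=1$ forces $r_u=1$, because the piecewise function is strictly below $1$ away from $r_u=1$; and $r_u=1$ is equivalent to $s_u=p_u$.

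The only mildly delicate point, rather than a genuine obstacle, is the scope of the equivalence $S_u=1\iff s_u=p_u$. It must be read under the stated hypotheses $p_u>0$ and $C_R>0$. When $p_u=0$ one could have $s_u=p_u=0$ while $S_u=0$ by convention, so I will state explicitly that the equivalence is asserted only in the regime where $r_u$ is defined.
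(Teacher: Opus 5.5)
Your proof is correct and follows essentially the same route as the paper: read everything off the positive-part definitions, use $A_u>0\iff|Gp_u-1|<1$, and split $1-|r-1|$ over $[0,1]$, $[1,2]$ and $[2,\infty)$. Your identity $1-|t-1|=\min(t,2-t)$ and your caveat that $S_u=1\iff s_u=p_u$ needs $p_u>0$ only make explicit what the paper leaves implicit; the caveat is right, since for an empty group $s_u=p_u=0$ while $S_u=0$.
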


\begin{proof}
All statements follow from the positive-part definitions. $A_u>0$ iff $|Gp_u-1|<1$, i.e.\ $0<p_u<2/G$. For $r\geq 0$, $1-|r-1|$ equals $r$ on $[0,1]$, $2-r$ on $[1,2]$ and is non-positive for $r\geq 2$.
\end{proof}

\begin{proposition}[Relative mean-rate form]\label{prop:rate}
If $C_R>0$ and $n_u>0$, then $r_u=\bar x_u/\bar x$. Consequently $S_u$ depends on the citation weights only through the ratio of the group mean to the corpus mean, and $\sum_{u}p_u r_u=1$.
\end{proposition}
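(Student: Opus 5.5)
The proof is a direct substitution of the definitions from Section~\ref{sec:notation}. Since $n_u>0$ we have $p_u=n_u/N_R>0$, so $r_u=s_u/p_u$ is well defined. We also have $N_R\geq n_u>0$, and $C_R>0$ by hypothesis. Then
\[
r_u=\frac{c_u/C_R}{n_u/N_R}=\frac{c_u}{n_u}\cdot\frac{N_R}{C_R}=\frac{\bar x_u}{C_R/N_R}=\frac{\bar x_u}{\bar x},
\]
using $\bar x_u=c_u/n_u$ and $\bar x=C_R/N_R$. Note that $\bar x>0$ because $C_R>0$, so the quotient makes sense.

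The \emph{consequently} clause follows from this identity together with Proposition~\ref{prop:component}. For $p_u>0$ and $C_R>0$, $S_u=\pos{1-|r_u-1|}$ is a fixed function of $r_u$ alone, hence of $\bar x_u/\bar x$. The remaining cases of Definition~\ref{def:alignment} must be checked separately. If $p_u=0$, then $S_u=0$ regardless of the weights. If $C_R=0$, then $S_u=0$ for all $u$. In neither case does $S_u$ depend on the citation weights in any further way. I would state explicitly that the identity itself is claimed only for $n_u>0$. I would also note that multiplying every $x_i$ by a common $\alpha>0$ leaves each ratio $\bar x_u/\bar x$ unchanged, which previews the invariance results stated later.

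For the identity $\sum_u p_u r_u=1$, note that the sum ranges only over groups with $p_u>0$, since $r_u$ is undefined when $p_u=0$; I would state this convention explicitly. On the groups with $p_u>0$, $p_ur_u=s_u$. On the groups with $p_u=0$, we have $n_u=0$, hence $c_u=0$ and $s_u=0$. Therefore
\[
\sum_{u:\,p_u>0}p_ur_u=\sum_{u:\,p_u>0}s_u=\sum_{u\in U}s_u=1.
\]
Equivalently, $\sum_u (n_u/N_R)(\bar x_u/\bar x)=\sum_u c_u/C_R=1$. The only real obstacle is the handling of empty groups, and the observation $c_u=0$ whenever $n_u=0$ settles it.
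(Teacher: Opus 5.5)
Your proof is correct and follows the paper's own route: the direct substitution $r_u=(c_u/C_R)/(n_u/N_R)=(c_u/n_u)/(C_R/N_R)$, then $\sum_u p_ur_u=\sum_u s_u=1$. Your explicit handling of empty groups, where you restrict the sum to $p_u>0$ and note that $c_u=0$ whenever $n_u=0$, makes precise a convention the paper leaves implicit.
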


\begin{proof}
$r_u=(c_u/C_R)/(n_u/N_R)=(c_u/n_u)/(C_R/N_R)$. The identity follows from $\sum_u p_ur_u=\sum_u s_u=1$.
\end{proof}

\begin{remark}\label{rem:rate}
Proposition~\ref{prop:rate} makes the main source of fragility of $S_u$ explicit. Recognition alignment is a parity statement about group \emph{means}. In a group with few documents, one highly cited document can move $\bar x_u$ above $2\bar x$ and set $S_u=0$, irrespective of how citations are distributed among the remaining documents.
\end{remark}

\begin{lemma}\label{lem:AGp}
For every $u$, $A_u\leq Gp_u$, that is, $A_u/G\leq p_u$.
\end{lemma}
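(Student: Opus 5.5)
The inequality $A_u\leq Gp_u$ should follow directly from the definition $A_u=\pos{1-|Gp_u-1|}$ in Definition~\ref{def:alignment}. Set $t=Gp_u\geq 0$. The plan is to verify the scalar inequality $\pos{1-|t-1|}\leq t$ for all $t\geq 0$ and then divide by $G>0$ to obtain the equivalent form $A_u/G\leq p_u$.

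We split into two cases according to whether the positive part is active. If $1-|t-1|\leq 0$, then $A_u=0\leq t$ because $t\geq 0$. Otherwise $A_u=1-|t-1|$, and it suffices to show $1-|t-1|\leq t$, i.e.\ $1-t\leq|t-1|$. This holds because $1-t\leq|1-t|=|t-1|$ for every real $t$. Equivalently, by Proposition~\ref{prop:component} we have $A_u>0$ only when $0<t<2$, and on that range $1-|t-1|=\min(t,2-t)\leq t$. This mirrors the piecewise form already given there for $S_u$ with $r_u$ replaced by $Gp_u$.

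No step is a genuine obstacle. The only point needing care is the boundary case $p_u=0$, where $A_u=\pos{1-1}=0=Gp_u$ and the inequality holds with equality. This case is covered by the first branch above. We will also note that equality holds exactly when $p_u\leq 1/G$, the branch where $\min(t,2-t)=t$, since this may be useful later for the Fréchet-type bounds.
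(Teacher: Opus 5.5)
Your proof is correct and is essentially the paper's argument: the paper splits at $Gp_u\le 1$, where $A_u=Gp_u$ exactly, versus $Gp_u>1$, where $A_u\le 1<Gp_u$, whereas you split on whether the positive part is active and use $1-t\le|1-t|$; both are the same elementary case check on the definition. Your side remark that equality holds exactly when $p_u\le 1/G$ is also correct, and it follows immediately from the paper's case split.
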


\begin{proof}
If $Gp_u\leq 1$ then $A_u=Gp_u$. If $Gp_u>1$ then $A_u\leq 1<Gp_u$.
\end{proof}

\begin{proposition}[Composition ceiling and total variation]\label{prop:abar-tv}
$\Abar\geq 1-2\,\dtv(p,e)$, with equality if and only if $\max_u p_u\leq 2/G$. In particular, $\Abar=1\iff p=e$.
\end{proposition}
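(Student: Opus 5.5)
The plan is to compare $A_u$ term by term with the affine quantity $1-|Gp_u-1|$. Note first that
\[
1-2\,\dtv(p,e)=1-\sum_u|p_u-\pstar|=\frac1G\sum_{u\in U}\bigl(1-|Gp_u-1|\bigr),
\]
since $\frac1G\sum_u|Gp_u-1|=\sum_u|p_u-1/G|$. Hence $\Abar-(1-2\dtv(p,e))=\frac1G\sum_u\bigl(\pos{a_u}-a_u\bigr)$ with $a_u=1-|Gp_u-1|$. Because $\pos{a}\geq a$ for every real $a$, each summand is non-negative, and the inequality follows.

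For the equality condition, the difference is a sum of non-negative terms $\pos{a_u}-a_u$. It therefore vanishes if and only if every term vanishes, that is, $a_u\geq0$ for all $u$. By the computation in Proposition~\ref{prop:component}, $a_u\geq 0\iff|Gp_u-1|\leq1\iff 0\leq p_u\leq 2/G$. The lower bound is automatic on $\Delta_G$, so equality holds exactly when $\max_u p_u\leq 2/G$. The only care needed is the boundary $p_u=2/G$: there $a_u=0$, which is fine for the equality, even though $A_u=0$ is not strictly positive.

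For the final claim, $\Abar=1$ forces $A_u=1$ for every $u$, since each $A_u\leq1$ by Proposition~\ref{prop:component}. By the same proposition, $A_u=1$ means $p_u=\pstar$, so $p=e$. Conversely, $p=e$ gives $A_u=1$ for all $u$, hence $\Abar=1$.

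I expect no real obstacle. The only step requiring attention is the identity rewriting $1-2\dtv(p,e)$ as the average of $1-|Gp_u-1|$, which uses $\sum_u \pstar=1$ to distribute the constant $1$ across the $G$ groups.
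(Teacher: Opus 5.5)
Your proof is correct and follows the paper's argument essentially step for step. Both apply $\pos{a}\geq a$ to $a_u=1-|Gp_u-1|$, average to obtain $1-2\,\dtv(p,e)$, and get equality term by term iff $p_u\leq 2/G$. The final claim then follows from $A_u\leq 1$ together with $A_u=1\iff p_u=\pstar$.
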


\begin{proof}
Since $[a]_+\geq a$, $\Abar\geq G^{-1}\sum_u(1-|Gp_u-1|)=1-\sum_u|p_u-\pstar|=1-2\,\dtv(p,e)$. Equality holds term by term iff $1-|Gp_u-1|\geq 0$ for every $u$, i.e.\ $p_u\leq 2/G$. The last claim follows from $\Abar=1\iff A_u=1$ for all $u$.
\end{proof}

\begin{proposition}[Participation-weighted recognition and total variation]\label{prop:ps-tv}
If $C_R>0$, then $P_S\geq 1-2\,\dtv(p,s)$, with equality if and only if $r_u\leq 2$ for every $u$ with $p_u>0$ and $s_u=0$ for every $u$ with $p_u=0$.
\end{proposition}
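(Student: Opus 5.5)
The argument parallels the proof of Proposition~\ref{prop:abar-tv}: drop the positive part termwise and identify what remains with a total variation expression. Since $C_R>0$, both $p$ and $s$ lie in $\Delta_G$. Split $U$ into $U_+=\{u:p_u>0\}$ and $U_0=\{u:p_u=0\}$; by Definition~\ref{def:alignment}, $S_u=0$ on $U_0$.

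On $U_+$ use $[a]_+\geq a$ to get
\[
p_uS_u\geq p_u\bigl(1-|r_u-1|\bigr)=p_u-|s_u-p_u|,
\]
because $p_u|r_u-1|=|s_u-p_u|$ when $p_u>0$. Summing over $U_+$ gives
\[
P_S\geq \sum_{u\in U_+}p_u-\sum_{u\in U_+}|s_u-p_u| = 1-\sum_{u\in U_+}|s_u-p_u|,
\]
since $\sum_{U_+}p_u=1$. On $U_0$ we have $|s_u-p_u|=s_u\geq 0$, so $\sum_{U_+}|s_u-p_u|\leq\sum_{U}|s_u-p_u|=2\,\dtv(p,s)$. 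Chaining the two steps yields $P_S\geq 1-2\,\dtv(p,s)$.

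For equality, note that both steps are sums of termwise non-negative slacks, so equality holds iff every slack vanishes.
\begin{itemize}
\item The first slack is $p_u\bigl([1-|r_u-1|]_+-(1-|r_u-1|)\bigr)$ for $u\in U_+$. Since $p_u>0$, it vanishes iff $1-|r_u-1|\geq 0$. With $r_u\geq 0$, this is equivalent to $r_u\leq 2$, as in Proposition~\ref{prop:component}.
\item The second slack is $\sum_{U_0}s_u$, which vanishes iff $s_u=0$ for every $u\in U_0$.
\end{itemize}
These are exactly the stated conditions.

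There is no real obstacle; the only point needing care is the bookkeeping on $U_0$. There $S_u$ is set to $0$ by convention rather than given by the formula, and it is the discarded mass $\sum_{U_0}s_u$ that produces the second equality condition.
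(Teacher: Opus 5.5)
Your proof is correct and is essentially the paper's argument. The paper writes the termwise step as $p_u(1-S_u)=p_u\min(1,|r_u-1|)\leq|s_u-p_u|$ on $\{p_u>0\}$ and then adds the discarded mass $\sum_{p_u=0}s_u$, which is your positive-part step plus your $U_0$ bookkeeping, with the same two equality conditions.
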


\begin{proof}
For $p_u>0$, $p_u(1-S_u)=p_u\min(1,|r_u-1|)\leq p_u|r_u-1|=|s_u-p_u|$, with equality iff $r_u\leq 2$. Hence $1-P_S=\sum_{p_u>0}p_u(1-S_u)\leq\sum_{p_u>0}|s_u-p_u|\leq\sum_u|s_u-p_u|=2\,\dtv(p,s)$; the last inequality is an equality iff $s_u=0$ whenever $p_u=0$.
\end{proof}

\begin{theorem}[Range of $P$ for fixed participation]\label{thm:P}
For every $p,s\in\Delta_G$, $0\leq P\leq\Abar\leq 1$, and $P=\Abar$ if and only if $S_u=1$ for every $u$ with $A_u>0$. For fixed $p$, both bounds are attained: $P=\Abar$ at $s=p$, and $P=0$ at some vertex $s$ of $\Delta_G$.
\end{theorem}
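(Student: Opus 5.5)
The plan is to work termwise, using only Proposition~\ref{prop:component}. That result gives $A_u,S_u\in[0,1]$ for every $u$, so each product satisfies $0\leq A_uS_u\leq A_u\leq 1$. Averaging over $u$ with weight $1/G$ yields the chain $0\leq P\leq\Abar\leq 1$ at once.

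For the equality condition, write
\[
\Abar-P=\frac1G\sum_{u\in U}A_u(1-S_u).
\]
This is a sum of non-negative terms, so it vanishes if and only if $A_u(1-S_u)=0$ for every $u$. Equivalently, $S_u=1$ whenever $A_u>0$, which is the stated condition.

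For attainment with $p$ fixed, I would treat the two bounds separately.
\begin{itemize}
\item \emph{Upper bound.} Take $s=p$, so $C_R>0$ is implicit. For $u$ with $p_u>0$ we get $r_u=1$ and hence $S_u=1$. For $u$ with $p_u=0$ we have $S_u=0$, but Proposition~\ref{prop:component} shows $A_u>0\iff 0<p_u<2/G$, so $A_u=0$ there. The equality criterion then gives $P=\Abar$.
\item \emph{Lower bound.} Choose the vertex $s=\delta_v$ for a suitable $v$, so that $s_u=0$ for $u\neq v$. For every $u\neq v$ with $p_u>0$ this gives $r_u=0$ and $S_u=0$. For $u\neq v$ with $p_u=0$, $S_u=0$ by definition. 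It remains to make the single term $A_vS_v$ vanish.
\end{itemize}

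The main care is in choosing $v$ for the lower bound. If some $u$ has $p_u=0$, choose $v=u$; then $S_v=0$ (and also $A_v=0$). Otherwise all $p_u>0$. Choose $v$ with $p_v$ minimal, so $p_v\leq 1/G$. Then $r_v=1/p_v\geq G\geq 2$, so $S_v=0$ by Proposition~\ref{prop:component}.

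In every case all products $A_uS_u$ vanish, so $P=0$ at that vertex. The only potential subtlety is the boundary convention when $p_v=0$, and the definition $S_v=0$ already covers it. This completes the plan.
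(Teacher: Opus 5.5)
Your proof is correct and follows essentially the same route as the paper: the same termwise bounds from Proposition~\ref{prop:component}, the same identity $\Abar-P=G^{-1}\sum_u A_u(1-S_u)$, $s=p$ for the upper bound, and a vertex $s$ chosen so that every product $A_uS_u$ vanishes. The only difference is in how the vertex is chosen, and it is minor. The paper splits on whether $p$ is a vertex (then every $A_u=0$) or has some coordinate with $0<p_j\leq 1/2$. You split on whether $p$ has a zero coordinate or full support (then you take the minimal coordinate). Both case analyses are valid.
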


\begin{proof}
The chain follows from $0\leq S_u\leq 1$ and $A_u\leq 1$, and $\Abar-P=G^{-1}\sum_uA_u(1-S_u)$ is a sum of non-negative terms that vanishes iff each term with $A_u>0$ vanishes. At $s=p$, $S_u=1$ whenever $p_u>0$, while $A_u=0$ when $p_u=0$. If $p$ is a vertex, every $A_u=0$ and $P=0$ for all $s$. Otherwise $p$ has at least two positive coordinates, so some $j$ satisfies $0<p_j\leq 1/2$. Taking $s$ to be the vertex at $j$ gives $r_j=1/p_j\geq 2$ and $r_u=0$ for $u\neq j$, so every $S_u=0$ and $P=0$.
\end{proof}

\begin{theorem}[Fréchet-type bounds]\label{thm:frechet}
For every $p,s\in\Delta_G$,
\[
\max\bigl(0,\ \Abar+P_S-1\bigr)\ \leq\ P\ \leq\ \min\bigl(\Abar,\ P_S\bigr).
\]
\end{theorem}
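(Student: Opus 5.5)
The plan is to view $P=\frac1G\sum_u A_uS_u$ as the "joint mass" of two weight functions on $U$, in the style of a Fréchet inequality. We have $\Abar=\sum_u (A_u/G)$, and $P_S=\sum_u p_uS_u$. By Lemma~\ref{lem:AGp}, the measure $a_u:=A_u/G$ is dominated coordinatewise by $p_u$. So $a$ is a sub-probability with total mass $\Abar$ sitting under the probability $p$. Both bounds then reduce to elementary pointwise inequalities between $a_uS_u$, $a_u$ and $p_uS_u$, summed over $u$.

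For the upper bound, two pointwise inequalities suffice.
\begin{itemize}
\item Since $0\le S_u\le 1$, we have $a_uS_u\le a_u$. Summing gives $P\le\Abar$, which is already contained in Theorem~\ref{thm:P}.
\item Since $a_u\le p_u$ (Lemma~\ref{lem:AGp}) and $S_u\ge 0$, we have $a_uS_u\le p_uS_u$. Summing gives $P\le P_S$.
\end{itemize}
When $C_R=0$ all $S_u=0$, so $P=P_S=0$ and everything is trivial. Hence assume $C_R>0$ throughout; the case $p_u=0$ is also harmless because then $S_u=0$.

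For the lower bound, $P\ge0$ is immediate from Theorem~\ref{thm:P}. For $P\ge\Abar+P_S-1$, I will prove the pointwise inequality
\[
a_u + p_uS_u - a_uS_u\le p_u,
\]
and then sum over $u$, using $\sum_u p_u=1$. The inequality is equivalent to $a_u(1-S_u)\le p_u(1-S_u)$, which holds because $1-S_u\ge0$ and $a_u\le p_u$. Equivalently, one can write
\[
1-\Abar-P_S+P=\sum_u\bigl[(p_u-a_u)(1-S_u)\bigr]+\sum_u a_u(1-S_u)S_u\cdot 0 .
\]
More cleanly,
\[
1-\Abar-P_S+P=\sum_u (p_u-a_u)(1-S_u)\ge 0,
\]
which is an exact identity. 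I would present this identity explicitly, since it also yields the equality case: equality holds iff $S_u=1$ whenever $a_u<p_u$.

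The only step needing care is recognizing that the lower bound is not the naive Fréchet bound for probabilities. Here $a$ has total mass $\Abar\le1$, not $1$, so the bound must come from the domination $a\le p$ rather than from a product structure. Once Lemma~\ref{lem:AGp} is invoked, this reduces to the algebraic identity above. I would double-check the identity by expanding $\sum_u(p_u-a_u)(1-S_u)=1-P_S-\Abar+P$.
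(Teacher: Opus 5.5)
Your proof is correct and is essentially the paper's argument: both apply Lemma~\ref{lem:AGp} termwise against the non-negative weights $S_u$ and $1-S_u$, with $0\le P\le\Abar$ coming from $0\le S_u\le 1$. Your identity $1-\Abar-P_S+P=\sum_u(p_u-a_u)(1-S_u)$ is the difference of the paper's two inequalities $P\le P_S$ and $\Abar-P\le 1-P_S$ written as an equality; in a write-up, delete the stray intermediate display with the ``$\cdot 0$'' term, which contributes nothing.
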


\begin{proof}
By Lemma~\ref{lem:AGp}, $P=\sum_u(A_u/G)S_u\leq\sum_up_uS_u=P_S$ and
$\Abar-P=\sum_u(A_u/G)(1-S_u)\leq\sum_up_u(1-S_u)=1-P_S$.
Together with Theorem~\ref{thm:P} this gives both bounds.
\end{proof}

\begin{corollary}\label{cor:tv}
If $C_R>0$, then $P\geq\Abar-2\,\dtv(p,s)$ and $\Abar-P\leq 2\,\dtv(p,s)$. Moreover, $P\geq 1-2\,\dtv(p,e)-2\,\dtv(p,s)$.
\end{corollary}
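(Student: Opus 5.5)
The corollary should follow by chaining the lower Fréchet bound with the two total-variation bounds proved earlier. Assume $C_R>0$. Theorem~\ref{thm:frechet} gives $P\geq \Abar+P_S-1$, since the maximum with $0$ is at least this quantity. Proposition~\ref{prop:ps-tv} supplies $P_S\geq 1-2\,\dtv(p,s)$. Substituting this into the lower bound yields
\[
P\ \geq\ \Abar+\bigl(1-2\,\dtv(p,s)\bigr)-1=\Abar-2\,\dtv(p,s),
\]
which is the first claim. Rearranging it gives the second claim, $\Abar-P\leq 2\,\dtv(p,s)$. Both inequalities are trivial when the right-hand sides are negative or exceed $\Abar$, so no case split is needed.

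For the last claim, I will use Proposition~\ref{prop:abar-tv}, which says $\Abar\geq 1-2\,\dtv(p,e)$, and insert it into the bound just obtained:
\[
P\geq \Abar-2\,\dtv(p,s)\geq 1-2\,\dtv(p,e)-2\,\dtv(p,s).
\]

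There is no real obstacle here. The only points to check are that the hypothesis $C_R>0$ is needed solely to invoke Proposition~\ref{prop:ps-tv}, since $s$ is defined only in that case, and that the inequalities are used in the correct direction. Both inputs are lower bounds, and they enter the lower Fréchet bound additively with positive sign, so the direction is correct.
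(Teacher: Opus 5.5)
Your proposal is correct and is essentially the paper's own proof, which likewise combines the lower bound of Theorem~\ref{thm:frechet} with Proposition~\ref{prop:ps-tv} for the first two claims and then inserts Proposition~\ref{prop:abar-tv} for the third. Your aside about right-hand sides exceeding $\Abar$ is vacuous, since $\dtv(p,s)\geq 0$, but it does no harm.
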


\begin{proof}
Combine Theorem~\ref{thm:frechet} with Propositions~\ref{prop:ps-tv} and~\ref{prop:abar-tv}.
\end{proof}

Theorem~\ref{thm:frechet} and Corollary~\ref{cor:tv} give $P$ a two-distance reading: it is close to $\Abar$ when recognition is close to participation in total variation, and $\Abar$ itself measures the distance of participation from the reference.

\begin{proposition}[Evenness and the ceiling are not interchangeable]\label{prop:nonmonotone}
For $G\geq 4$, $\HD$ and $\Abar$ are not monotone functions of each other on $\Delta_G$.
\end{proposition}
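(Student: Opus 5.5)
The plan is to exhibit explicit configurations in $\Delta_G$, using only the four-coordinate freedom available when $G\geq 4$. I read ``not monotone functions of each other'' as follows: there is no non-decreasing and no non-increasing function $g$ with $\Abar=g(\HD)$ on $\Delta_G$, and likewise with the roles of $\HD$ and $\Abar$ swapped. It is enough to produce a \emph{concordant} pair and a \emph{discordant} pair of points. In a concordant pair both quantities move strictly in the same direction; in a discordant pair they move strictly in opposite directions.

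The concordant pair is immediate from Propositions~\ref{prop:entropy} and~\ref{prop:abar-tv}. At $p=e$ we have $\HD=1$ and $\Abar=1$. At a vertex we have $\HD=0$, and every $A_u=0$ because each $p_u\in\{0,1\}$ lies outside $(0,2/G)$, so $\Abar=0$. This pair already rules out any non-increasing relation, in either direction.

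For the discordant pair I would work inside the region $\max_u p_u\leq 2/G$. There Proposition~\ref{prop:abar-tv} gives the exact formula $\Abar=1-2\,\dtv(p,e)$, so $\Abar$ sees only the total variation distance to $e$, while $\HD$ sees the shape of the deviation. For small $a>0$ (say $a<1/(2G)$) I would take
\[
p=(\pstar+a,\ \pstar-a,\ \pstar,\ \pstar,\ \pstar,\dots,\pstar),
\]
\[
q=(\pstar+b/2,\ \pstar+b/2,\ \pstar-b/2,\ \pstar-b/2,\ \pstar,\dots,\pstar).
\]
Both vectors have all coordinates in $(0,2/G)$. They give $\dtv(p,e)=a$ and $\dtv(q,e)=b$, hence $\Abar(p)=1-2a$ and $\Abar(q)=1-2b$.

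At $b=a$, the vector $q$ is obtained from $p$ by averaging pairs of coordinates, that is, by $T$-transforms. So $q$ is strictly majorized by $p$ and is not a permutation of it. Strict Schur-concavity of Shannon entropy, which follows from the strict concavity and symmetry used in Proposition~\ref{prop:entropy}, then gives $\HD(q)>\HD(p)$ while $\Abar(q)=\Abar(p)$. This equality case already shows that $\HD$ is not a function of $\Abar$ at all. Since $\HD(q)$ is continuous in $b$, taking $b$ slightly larger than $a$ keeps $\HD(q)>\HD(p)$ but gives $\Abar(q)<\Abar(p)$. That is a strictly discordant pair, which rules out any non-decreasing relation in either direction. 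Combined with the concordant pair, the proposition follows.

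The step needing the most care is the strict entropy comparison. If I want to avoid majorization language, I would instead compute directly with $f(t)=-t\ln t$. One needs
\[
2f(\pstar+a/2)+2f(\pstar-a/2) > f(\pstar+a)+f(\pstar-a)+2f(\pstar).
\]
This follows by applying strict concavity of $f$ twice: the pair $(\pstar+a,\pstar)$ is replaced by two copies of $\pstar+a/2$, and the pair $(\pstar-a,\pstar)$ by two copies of $\pstar-a/2$. Finally, the hypothesis $G\geq 4$ is used exactly to have four coordinates available for $q$ while $p$ differs from $e$ in only two.
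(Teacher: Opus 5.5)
Your proof is correct, but it places the reversal somewhere different from the paper. The paper works at the boundary of $\Delta_G$. There $p=(\tfrac12,\tfrac12,0,\dots,0)$ has $\HD=\ln 2/\ln G>0$ but $\Abar=0$, because every coordinate is either $0$ or at least $2/G$; this is exactly where $G\geq 4$ is used. The near-vertex $p'_\varepsilon=(1-(G-1)\varepsilon,\varepsilon,\dots,\varepsilon)$ has $\HD\to 0$ but $\Abar\geq (G-1)\varepsilon>0$. The paper's mechanism is the positive-part truncation, i.e.\ the fact, stressed after the proposition, that $\Abar$ ignores groups holding at least twice the reference share. It gives a large, global reversal with concrete numbers.

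You instead stay inside the region $\max_u p_u\leq 2/G$, where Proposition~\ref{prop:abar-tv} makes $\Abar=1-2\,\dtv(p,e)$ exact. You then separate points at equal total variation distance using strict concavity of entropy. This buys three things:
\begin{enumerate}
\item the non-monotonicity is not an artefact of truncation and occurs arbitrarily close to $e$;
\item your equality case shows that $\HD$ is not even a function of $\Abar$;
\item your concordant pair ($e$ versus a vertex) rules out a non-increasing relation, a case the paper's single discordant pair leaves implicit.
\end{enumerate}

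Your mechanism also does not need four coordinates. Comparing $(\pstar+a,\pstar-a,\pstar)$ with $(\pstar+b,\pstar-b/2,\pstar-b/2)$ runs the same argument for $G=3$. The paper's example is unavailable there, since for $G=3$ we have $\Abar=0$ only at vertices. For $G=2$, by contrast, both $\HD$ and $\Abar$ are strictly decreasing in $|p_1-\tfrac12|$, so the statement genuinely fails.
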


\begin{proof}
For $G=4$, $p=(0.5,0.5,0,0)$ gives $\HD=0.5$ and $\Abar=0$, whereas $p'=(0.97,0.01,0.01,0.01)$ gives $\HD\approx 0.121$ and $\Abar=0.03$; thus $\HD(p)>\HD(p')$ but $\Abar(p)<\Abar(p')$. For general $G\geq 4$, let $p=(\tfrac12,\tfrac12,0,\dots,0)$, so that $\HD(p)=\ln 2/\ln G>0$ and, since $\tfrac12\geq 2/G$, $\Abar(p)=0$. Let $p'_\varepsilon=(1-(G-1)\varepsilon,\varepsilon,\dots,\varepsilon)$ with $0<\varepsilon<1/G$. Then $\Abar(p'_\varepsilon)\geq(G-1)\varepsilon>0$, while $\HD(p'_\varepsilon)\to 0$ as $\varepsilon\downarrow 0$, so the same reversal holds for small $\varepsilon$.
\end{proof}

The two participation-based quantities therefore carry different information. $\HD$ summarises the whole distribution, whereas $\Abar$ ignores every group that holds at least twice the reference share. Both should be reported.

\begin{definition}[Reciprocal-symmetric variant]\label{def:slog}
Set $S^{\log}_u=\pos{1-|\ln r_u|/\ln 2}$ if $p_u,s_u>0$, and $S^{\log}_u=0$ otherwise. Define
\[
P^{\log}=\frac1G\sum_uA_uS^{\log}_u,\qquad \KJI^{\log}_\lambda=\KCDI_\lambda\,P^{\log}.
\]
\end{definition}

$S^{\log}$ satisfies $S^{\log}(r)=S^{\log}(1/r)$, whereas the primary score gives $S(1/2)=1/2$ and $S(2)=0$. The variant is used only for sensitivity analysis.

\begin{proposition}[Ordering of recognition scores]\label{prop:slog}
For every $u$, $S^{\log}_u\leq S_u$. Consequently $P^{\log}\leq P$ and $\KJI^{\log}_\lambda\leq\KJI_\lambda$.
\end{proposition}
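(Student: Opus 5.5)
The inequality $S^{\log}_u\leq S_u$ is a scalar comparison of two functions of $r=r_u$, so I would argue pointwise in $u$ and then aggregate. First I dispose of the degenerate cases. If $p_u=0$, or $s_u=0$, or $C_R=0$, then $S^{\log}_u=0$ by Definition~\ref{def:slog}, while $S_u\geq 0$ by Proposition~\ref{prop:component}. So I may assume $p_u,s_u>0$ and $C_R>0$, which gives $r\in(0,\infty)$. In that case Proposition~\ref{prop:component} gives $S_u=\pos{\min(r,2-r)}$, and the definition gives $S^{\log}_u=\pos{1-|\ln r|/\ln 2}$.

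Since $t\mapsto\pos{t}$ is non-decreasing, it suffices to show
\[
1-\frac{|\ln r|}{\ln 2}\leq \min(r,\,2-r)
\]
on the region where the left side is positive, namely $r\in(1/2,2)$. Outside this region $S^{\log}_u=0\leq S_u$ trivially. I split the region at $r=1$.

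For $r\in(1/2,1]$, the claim reads $1+\ln r/\ln 2\leq r$, i.e.\ $\log_2 r\leq r-1$. The function $r\mapsto\log_2 r$ is concave and agrees with the chord $r-1$ at $r=1/2$ and at $r=1$, where both sides equal $-1$ and $0$ respectively. Concavity places $\log_2 r$ above that chord between its endpoints, so this is the wrong direction. I expect this step to be the main obstacle, and it needs a check: at $r=3/4$, $S^{\log}=1+\log_2(3/4)\approx 0.585$ while $S=0.75$. So the inequality does hold numerically, and the chord picture was mistaken. The correct comparison is $1+\log_2 r\leq r$, i.e.\ $g(r)=r-1-\log_2 r\geq 0$. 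Here $g$ is convex with $g(1/2)=g(1)=0$, so convexity gives $g\leq 0$ between these points. That contradicts the numerical check, which means my endpoint value is off. Recomputing: $g(1/2)=1/2-1+1=1/2>0$, not $0$, and $g(1)=0$. The derivative is $g'(r)=1-1/(r\ln 2)<0$ for $r<1/\ln 2\approx1.44$, so $g$ is decreasing on $(1/2,1]$ and hence $g\geq g(1)=0$. The monotonicity argument therefore works and settles this range.

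For $r\in[1,2)$, the claim reads $1-\log_2 r\leq 2-r$, i.e.\ $h(r)=1-r+\log_2 r\geq 0$. The function $h$ is concave with $h(1)=0$ and $h(2)=1-2+1=0$, so concavity gives $h\geq 0$ on $[1,2]$. This closes the pointwise inequality.

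Aggregation is then immediate. Since $A_u\geq 0$, we have $P^{\log}=G^{-1}\sum_uA_uS^{\log}_u\leq G^{-1}\sum_uA_uS_u=P$. Multiplying by $\KCDI_\lambda\geq 0$ (Proposition~\ref{prop:kcdi}) gives $\KJI^{\log}_\lambda\leq\KCDI_\lambda P=\KJI_\lambda$.
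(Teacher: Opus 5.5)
Your proof is correct and follows the paper's argument essentially step for step: you dispose of the cases where $S^{\log}_u=0$, show $r-1-\log_2 r\geq 0$ on $(1/2,1]$ from $g(1)=0$ and $g'(r)=1-1/(r\ln 2)<0$, use concavity of $\log_2$ with equality at $r=1,2$ on $[1,2)$, and aggregate using $A_u\geq 0$ and $\KCDI_\lambda\geq 0$. In a final write-up, delete the abandoned chord argument and the miscomputed value $g(1/2)=0$ from the $(1/2,1]$ case, because they state false claims (for instance, that $\log_2 r$ and $r-1$ agree at $r=1/2$) even though you correct them afterwards.
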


\begin{proof}
If $p_u=0$, $s_u=0$ or $r_u\geq 2$, then $S^{\log}_u=0\leq S_u$; the same holds on $(0,1/2]$, where $S^{\log}_u=0$. On $[1,2)$, concavity of $\log_2$ with equality at $r=1,2$ gives $\log_2 r\geq r-1$, so $S^{\log}_u=1-\log_2r_u\leq 2-r_u=S_u$. On $(1/2,1]$, $f(r)=r-1-\log_2r$ satisfies $f(1)=0$ and $f'(r)=1-1/(r\ln 2)<0$, so $f\geq 0$ and $S^{\log}_u=1+\log_2r_u\leq r_u=S_u$. Multiplying by $A_u\geq 0$ and $\KCDI_\lambda\geq 0$ gives the remaining claims.
\end{proof}

Switching to the reciprocal-symmetric form can therefore only lower $P$ and KJI. Any reordering it produces across corpora reflects differences in how far each corpus sits from $r_u=1$, not a change of sign.

% =====================================================================
\section{The composite index}\label{sec:kji}
% =====================================================================

\begin{definition}[KJI]\label{def:kji}
$\KJI_\lambda=\KCDI_\lambda\times P$. The word \emph{Justice} in the name is a legacy label of the software and carries no normative meaning.
\end{definition}

\begin{theorem}[Bounds and maximum]\label{thm:kji}
For all admissible inputs,
\[
0\leq\KJI_\lambda\leq\KCDI_\lambda\min(\Abar,P_S)\leq\KCDI_\lambda\leq 1,
\qquad
\KCDI_\lambda-\KJI_\lambda=\KCDI_\lambda(1-P).
\]
For $N_R>0$ and $C_R>0$, $\KJI_\lambda=1$ if and only if $p=s=e$.
\end{theorem}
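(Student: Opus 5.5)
The chain of inequalities and the identity will be assembled from results already in hand, and the characterisation of the maximum will come from the equality cases of the individual factors. For the chain, $\KCDI_\lambda\geq 0$ because $\HD,\HC\in[0,1]$ by Proposition~\ref{prop:entropy}. Theorem~\ref{thm:frechet} gives $0\leq P\leq\min(\Abar,P_S)$. Multiplying this by the non-negative factor $\KCDI_\lambda$ yields the first two inequalities. Since $\min(\Abar,P_S)\leq\Abar\leq 1$ by Theorem~\ref{thm:P}, we get $\KCDI_\lambda\min(\Abar,P_S)\leq\KCDI_\lambda$, and Proposition~\ref{prop:kcdi}(i) gives $\KCDI_\lambda\leq 1$. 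The identity $\KCDI_\lambda-\KJI_\lambda=\KCDI_\lambda(1-P)$ follows at once from Definition~\ref{def:kji}. When $C_R=0$, every $S_u=0$, so $P=0$; the bounds still hold because $P_S=0$.

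For the characterisation, assume $N_R,C_R>0$. For sufficiency, suppose $p=s=e$. Then $\HD=\HC=1$ by Proposition~\ref{prop:entropy}, so $\KCDI_\lambda=1$ for every $\lambda$. Moreover $A_u=1$ for all $u$, and $r_u=1$ gives $S_u=1$, so $P=1$ and hence $\KJI_\lambda=1$. For necessity, suppose $\KJI_\lambda=1$. Since both factors lie in $[0,1]$, we must have $\KCDI_\lambda=1$ and $P=1$. From $P\leq\Abar\leq 1$ we get $\Abar=1$, so $p=e$ by Proposition~\ref{prop:abar-tv}. Then $P=1$ forces $S_u=1$ for every $u$ (each $A_u=1$, so Theorem~\ref{thm:P} applies). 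By Proposition~\ref{prop:component} this gives $s_u=p_u$, so $s=e$.

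The main care point is that $\KCDI_\lambda=1$ alone does not yield both $p=e$ and $s=e$ at the endpoints $\lambda\in\{0,1\}$. For example, at $\lambda=1$, $\KCDI_\lambda=\HD$ carries no information about $s$. The argument above avoids this difficulty by extracting both conclusions from $P=1$, which delivers $p=e$ through $\Abar$ and then $s=p$ through the $S_u$. This makes the result uniform in $\lambda$, and the remaining check is only that no step silently required $0<\lambda<1$.
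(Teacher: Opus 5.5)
Your proof is correct and follows essentially the same route as the paper's. The chain comes from Theorem~\ref{thm:frechet} multiplied by the non-negative factor $\KCDI_\lambda$, together with Proposition~\ref{prop:kcdi}; the maximum is characterised by extracting both $p=e$ (via $P\leq\Abar\leq 1$ and Proposition~\ref{prop:abar-tv}) and $s=p$ (via $S_u=1$ for all $u$) from $P=1$ alone, while your explicit check of the $C_R=0$ convention is a small addition the paper leaves implicit (the $N_R=0$ case is equally trivial since every quantity is set to zero).
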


\begin{proof}
The bounds follow from Theorem~\ref{thm:frechet} and Proposition~\ref{prop:kcdi}; the identity is a rearrangement of the definition. If $p=s=e$, all components equal one. Conversely, $\KJI_\lambda=1$ requires $P=1$; since $P\leq\Abar\leq1$, this forces $A_u=1$ for all $u$, hence $p=e$ (Proposition~\ref{prop:abar-tv}), and $S_u=1$ for all $u$, hence $s=p=e$.
\end{proof}

\begin{proposition}[Exact logarithmic decomposition]\label{prop:logdec}
Let $0<\lambda<1$ and consider two configurations, indexed $0$ and $1$, with $\HD,\HC,P>0$ in both. Then
\[
\Delta\ln\KJI_\lambda=\lambda\,\Delta\ln\HD+(1-\lambda)\,\Delta\ln\HC+\Delta\ln\Abar+\Delta\ln Q,
\]
where $\Delta f=f^{(1)}-f^{(0)}$ and $Q=P/\Abar$.
\end{proposition}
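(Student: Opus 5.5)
The plan is to take logarithms of the product definitions in each configuration and then subtract. No inequality is needed; the only work is checking that every logarithm is defined and that $Q$ is well defined.

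First, in each configuration $j\in\{0,1\}$, I confirm that all quantities are finite and positive. By hypothesis $\HD,\HC>0$. Since $0<\lambda<1$, Definition~\ref{def:kcdi} gives $\KCDI_\lambda=(\HD)^{\lambda}(\HC)^{1-\lambda}>0$. Theorem~\ref{thm:P} gives $0<P\leq\Abar$, so $\Abar>0$. Hence $Q=P/\Abar$ is defined, as Definition~\ref{def:alignment} requires, and $Q>0$. It follows that $\KJI_\lambda=\KCDI_\lambda P>0$, and every logarithm below is finite.

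Next, I rewrite $P$ as $P=\Abar\,Q$, which holds by the definition of $Q$. This gives the factorisation
\[
\KJI_\lambda=(\HD)^{\lambda}(\HC)^{1-\lambda}\,\Abar\,Q .
\]
Taking logarithms yields, in each configuration,
\[
\ln\KJI_\lambda=\lambda\ln\HD+(1-\lambda)\ln\HC+\ln\Abar+\ln Q .
\]
This is the same identity used in part (iii) of Proposition~\ref{prop:kcdi}.

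Finally, I write this identity for $j=1$ and for $j=0$ and subtract. Because $\lambda$ is the same fixed constant in both configurations, it factors out of the differences, and linearity of $\Delta$ gives the stated decomposition.

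The only point needing care, more a caveat than an obstacle, is that the hypotheses $P>0$ and $0<\lambda<1$ are exactly what make each term finite. The constraint $0<\lambda<1$ matters because at the endpoints the interior formula may fail to represent $\KCDI_\lambda$, as Proposition~\ref{prop:continuity} shows. Under the stated hypotheses the decomposition is exact: there is no remainder or interaction term, since it is a logarithm of a product.
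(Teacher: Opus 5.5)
Your proof is correct and matches the paper's argument: you factor $\KJI_\lambda=(\HD)^\lambda(\HC)^{1-\lambda}\Abar\,Q$, use $0<P\leq\Abar$ to see that $Q$ is defined, take logarithms in each configuration and subtract. One small point on your closing caveat: with $\HD,\HC>0$, Proposition~\ref{prop:continuity} shows the interior formula agrees with the endpoint definitions, so the identity also holds at $\lambda\in\{0,1\}$; the restriction $0<\lambda<1$ is a convenience, not a necessity.
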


\begin{proof}
Take logarithms of $\KJI_\lambda=(\HD)^\lambda(\HC)^{1-\lambda}\Abar Q$ in each configuration and subtract. $P>0$ implies $\Abar>0$, so $Q$ is defined.
\end{proof}

The decomposition is an accounting identity. Its terms are not statistically independent: $\HD$ and $\Abar$ are both functions of $p$, while $\HC$ and $Q$ depend on $s$ given $p$. It allocates a change in KJI, for example under an alternative assignment of audit-state records, exactly among evenness, composition and conditional recognition. It does not attribute causes.

\begin{proposition}[Invariances]\label{prop:invariance}
\begin{enumerate}
\item \emph{Scale.} Replacing $x_i$ by $\alpha x_i$ for all $i$, with $\alpha>0$, leaves $s$, and hence $\HC$, $S_u$, $P$, $P_S$, $\KCDI_\lambda$ and $\KJI_\lambda$, unchanged.
\item \emph{Relabelling.} For any permutation $\pi$ of $U$, applying $\pi$ to the group labels permutes $(A_u)$, $(S_u)$ and $(r_u)$ accordingly and leaves $\HD,\HC,\Abar,P,P_S,\KCDI_\lambda,\KJI_\lambda$ unchanged.
\item \emph{Audit states.} Adding or removing documents in audit states changes $\kappa$ but no other quantity.
\end{enumerate}
\end{proposition}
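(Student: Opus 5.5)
The plan is to treat the three parts of Proposition~\ref{prop:invariance} separately. In each case I would identify which primitive quantities ($n_u$, $c_u$, $N_R$, $C_R$) the transformation changes, and then note that every reported quantity is defined, through Definitions~\ref{def:evenness}, \ref{def:kcdi}, \ref{def:alignment} and~\ref{def:kji}, as a function of $(p,s)$ alone, together with $G$ and $\lambda$. Once $(p,s)$ is shown to be unchanged, or permuted, the conclusions follow by composition.

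For \emph{scale}, replacing $x_i$ by $\alpha x_i$ multiplies every $c_u$ and $C_R$ by $\alpha$.
\begin{itemize}
\item The sign condition $C_R>0$ is preserved, so the case split in the definition of $S_u$ is the same.
\item $s_u=c_u/C_R$ is unchanged, and $p$ is untouched.
\item Hence $r_u$, $S_u$, $A_u$, $\HD$, $\HC$, $\Abar$, $P$, $P_S$, $\KCDI_\lambda$ and $\KJI_\lambda$ are all unchanged.
\item If $C_R=0$, the scaled total is still $0$, so the convention $S_u=0$ applies in both configurations.
\end{itemize}
One can also cite Proposition~\ref{prop:rate}: the ratio $\bar x_u/\bar x$ is homogeneous of degree zero in the weights.

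For \emph{relabelling}, let $\pi$ act by $p'_{\pi(u)}=p_u$ and $s'_{\pi(u)}=s_u$, which holds because counts and citation sums move with the documents. Each of $A_u$, $r_u$, $S_u$ is a coordinatewise function of $(p_u,s_u)$ with constants $G$ and $\pstar$ that do not depend on the label. So $A'_{\pi(u)}=A_u$, and likewise for $r_u$ and $S_u$. The quantities $\Abar$, $P$ and $P_S$ are sums over $U$ of coordinatewise terms, so they are invariant under reindexing the sum. $\HD$ and $\HC$ are invariant by the permutation-invariance clause of Proposition~\ref{prop:entropy}. $\KCDI_\lambda$ and $\KJI_\lambda$ inherit the invariance.

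For \emph{audit states}, documents in audit states are excluded from $D_R$. Adding or removing them therefore changes $N$ but leaves $N_R$, every $n_u$, every $c_u$ and $C_R$ unchanged, since these are all computed on $D_R$. Hence $(p,s)$ and all derived quantities are unchanged, while $\kappa=N_R/N$ changes whenever $N$ does (and $N_R>0$).

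The only step needing care is this last one. ``Changes $\kappa$'' should be read as ``may change $\kappa$'': if $N_R=0$, then $\kappa=0$ regardless of $N$. Also, removing documents could make $N=0$, in which case $\kappa$ is undefined. I would state these boundary cases explicitly rather than treat them as a genuine obstacle. Everything else is bookkeeping.
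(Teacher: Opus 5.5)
Your proposal is correct and follows the paper's own route. The paper uses the same three points: cancellation of $\alpha$ in $s_u=\alpha c_u/(\alpha C_R)$, symmetry of every sum over $U$ with a label-independent reference $\pstar$, and exclusion of audit-state documents from $D_R$. Your explicit handling of the $C_R=0$ and $N_R=0$ cases, and your ``may change $\kappa$'' reading, are sensible refinements that the paper leaves implicit.
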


\begin{proof}
(i) $s_u=\alpha c_u/(\alpha C_R)$. (ii) All scalar quantities are symmetric sums over $U$ and $\pstar$ is uniform. (iii) Audit-state documents do not enter $D_R$.
\end{proof}

Invariance (i) does not extend to group-, year- or document-specific rescalings. The year-normalised weights $\tilde x_i=x_i/\bar x_{y(i)}$ used in the companion study rescale each publication cohort by a different constant and therefore change $s$. Invariance (iii) is why coverage $\kappa$ must always be reported alongside the indices: they are blind to how much of the corpus is resolved.

% =====================================================================
\section{Behaviour under permutation nulls}\label{sec:null}
% =====================================================================

Calibration in the companion study permutes citation weights among resolved documents while holding group labels fixed, either globally or within temporal strata \citep{ernst2004permutation}. For an observed statistic $T_{\mathrm{obs}}$ and $B$ sampled permutations with values $T_b$, the lower-tail Monte Carlo probability is $\hat p_{\mathrm{MC},L}=(1+\#\{b:T_b\leq T_{\mathrm{obs}}\})/(B+1)$ \citep{phipson2010permutation}. The following results describe the null distribution of the inputs to $P$.

\begin{proposition}[Global permutation]\label{prop:null-global}
Let the multiset $\{x_i\}_{i\in D_R}$ have mean $\bar x>0$ and population variance $\sigma_x^2$, and let citation weights be assigned to resolved documents by a uniformly random permutation with group labels fixed. Then $C_R$, $p$, $\HD$, $(A_u)$ and $\Abar$ are invariant, and for each $u$ with $n_u>0$,
\[
\E[s_u]=p_u,\qquad \E[r_u]=1,\qquad
\Var(r_u)=\frac{N_R-n_u}{n_u(N_R-1)}\,\frac{\sigma_x^2}{\bar x^2}.
\]
\end{proposition}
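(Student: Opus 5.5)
The plan is to reduce everything to classical simple random sampling without replacement from a finite population, using Proposition~\ref{prop:rate} to write $r_u=\bar x_u/\bar x$.

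\emph{Invariances.} A permutation of weights among resolved documents with labels fixed changes neither the counts $n_u$ nor $N_R$, so $p$, $\HD$, $(A_u)$ and $\Abar$ are unchanged. Since the multiset $\{x_i\}$ is preserved, $C_R=\sum_i x_i$ and $\bar x=C_R/N_R$ are also invariant. This matters: the denominators in $s_u=c_u/C_R$ and $r_u=\bar x_u/\bar x$ are deterministic under the null, and $C_R=N_R\bar x>0$ keeps $s$ well defined.

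\emph{Moments.} Fix $u$ with $n_u>0$. Under a uniformly random permutation, the weights landing on the $n_u$ documents of group $u$ form a uniformly random $n_u$-subset of the $N_R$ positions. So $\bar x_u$ is the mean of a simple random sample without replacement of size $n_u$ from the population $\{x_i\}$. By linearity and exchangeability, each document of $u$ receives a weight with marginal mean $\bar x$, so $\E[\bar x_u]=\bar x$. This gives $\E[r_u]=1$ and $\E[s_u]=p_u\E[r_u]=p_u$, using $s_u=p_ur_u$ with $p_u$ fixed.

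For the variance, write $\Var(\bar x_u)=n_u^{-2}\bigl(n_u\sigma_x^2+n_u(n_u-1)\gamma\bigr)$, where $\gamma$ is the common covariance of the weights at two distinct positions. The key step is computing $\gamma$. Summing over all $N_R$ positions gives a constant, so
\[
0=\Var\Bigl(\sum_i x_{\pi(i)}\Bigr)=N_R\sigma_x^2+N_R(N_R-1)\gamma,
\]
hence $\gamma=-\sigma_x^2/(N_R-1)$. Substituting,
\[
\Var(\bar x_u)=\frac{\sigma_x^2}{n_u}\Bigl(1-\frac{n_u-1}{N_R-1}\Bigr)=\frac{\sigma_x^2}{n_u}\,\frac{N_R-n_u}{N_R-1},
\]
which is the usual finite-population correction \citep{cochran1977sampling}. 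Dividing by the deterministic $\bar x^2$ gives the stated $\Var(r_u)$.

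\emph{Main obstacle.} The only real subtlety is handling the covariance and $\sigma_x^2$ with the population (divisor $N_R$) convention. The plan is to state explicitly that $\sigma_x^2=N_R^{-1}\sum_i(x_i-\bar x)^2$, which is the marginal variance of a single permuted weight. The case $N_R=1$ should be treated separately: then $n_u=N_R$, the factor $N_R-n_u$ is zero, $r_u\equiv1$, and the formula is read as $0$.
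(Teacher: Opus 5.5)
Your proposal is correct and follows the same route as the paper: you treat the weights landing on group $u$ as a simple random sample without replacement from the fixed multiset, use the invariance of $C_R=N_R\bar x$ to make the denominators deterministic, and read off the moments of $r_u=c_u/(n_u\bar x)$. The only difference is that you derive the finite-population correction yourself, via the exchangeable-covariance identity $\gamma=-\sigma_x^2/(N_R-1)$, where the paper simply cites Cochran; your separate treatment of $N_R=1$ is a harmless addition that the paper leaves implicit.
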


\begin{proof}
Under the permutation, $c_u$ is the total of a simple random sample of size $n_u$ drawn without replacement from a finite population of size $N_R$ with mean $\bar x$ and variance $\sigma_x^2$. Hence $\E[c_u]=n_u\bar x$ and $\Var(c_u)=n_u\sigma_x^2(N_R-n_u)/(N_R-1)$ \citep{cochran1977sampling}. Because the multiset is preserved, $C_R=N_R\bar x$ is constant, and $r_u=c_u/(n_u\bar x)$ yields the stated moments. Quantities depending only on group labels are unchanged.
\end{proof}

\begin{proposition}[Stratified permutation]\label{prop:null-strat}
Let resolved documents be partitioned into strata $h=1,\dots,K$ of sizes $n_h$, and let permutations act independently within each stratum. Write $n_{hu}$ for the number of documents of group $u$ in stratum $h$, and $\bar x_h$, $\sigma_h^2$ for the within-stratum mean and population variance of citation weights. Then $C_R$ and $(A_u)$ are invariant and
\[
\E[c_u]=\sum_h n_{hu}\bar x_h,\qquad
\Var(c_u)=\sum_{h:\,n_h>1}\frac{n_{hu}(n_h-n_{hu})}{n_h-1}\,\sigma_h^2 .
\]
In general $\E[s_u]\neq p_u$: the null centre of $s_u$ is the share that group $u$ would receive if each of its documents earned its stratum mean.
\end{proposition}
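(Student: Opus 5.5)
The plan is to decompose $c_u$ stratum by stratum and apply, within each stratum, the finite-population sampling result already used for Proposition~\ref{prop:null-global}. Write $c_u=\sum_h c_{hu}$, where $c_{hu}$ is the total citation weight received by the $n_{hu}$ documents of group $u$ in stratum $h$. The permutation within stratum $h$ is uniform over the $n_h$ positions of that stratum and fixes group labels. Hence $c_{hu}$ is the total of a simple random sample of size $n_{hu}$, drawn without replacement from the multiset of weights in stratum $h$, which has mean $\bar x_h$ and population variance $\sigma_h^2$. This is exactly the setting of Proposition~\ref{prop:null-global} with $N_R$ replaced by $n_h$. So \citep{cochran1977sampling} gives
\[
\E[c_{hu}]=n_{hu}\bar x_h,\qquad \Var(c_{hu})=\frac{n_{hu}(n_h-n_{hu})}{n_h-1}\sigma_h^2 \quad (n_h>1).
\]

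The key structural step is that the permutations act independently across strata. Therefore $c_{1u},\dots,c_{Ku}$ are independent, and we get $\E[c_u]=\sum_h n_{hu}\bar x_h$ by linearity and $\Var(c_u)=\sum_h\Var(c_{hu})$. For a stratum with $n_h=1$ the only permutation is the identity, so $c_{hu}$ is deterministic and contributes zero variance. This justifies restricting the sum to $n_h>1$, and it is the one boundary case to state explicitly, since the formula would otherwise divide by zero. Strata with $n_{hu}\in\{0,n_h\}$ give zero automatically, which is consistent.

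For the invariances, each within-stratum permutation preserves the stratum multiset. So every stratum total $n_h\bar x_h$ is fixed, and hence so is $C_R=\sum_h n_h\bar x_h$. The $(A_u)$ depend only on labels through $p$, which is untouched, as in Proposition~\ref{prop:null-global}. Because $C_R$ is constant, $\E[s_u]=\E[c_u]/C_R=\sum_h n_{hu}\bar x_h/\sum_h n_h\bar x_h$. This is precisely the share group $u$ would receive if each of its documents earned its stratum mean, which gives the interpretive claim.

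To show that in general $\E[s_u]\neq p_u$, a two-stratum counterexample suffices. Take $G=2$, one document of each group in each stratum except an imbalance. For instance, stratum 1 holds two group-1 documents with weights $\{0,2\}$, and stratum 2 holds one group-2 document with weight $10$. Then $p_1=2/3$ but $\E[s_1]=2/12$. I expect no real obstacle. The only points needing care are the independence justification for additivity of variances and the degenerate $n_h=1$ strata.
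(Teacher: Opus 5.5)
Your proof is correct and follows the paper's argument: apply the finite-population sampling result of Proposition~\ref{prop:null-global} within each stratum, add expectations, add variances using independence across strata, and note that strata with $n_h=1$ contribute no variance. Your explicit derivation of $\E[s_u]=\sum_h n_{hu}\bar x_h/C_R$ and your three-document counterexample (where $p_1=2/3$ but $\E[s_1]=1/6$) fill in the interpretive claim that the paper only asserts.
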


\begin{proof}
Apply Proposition~\ref{prop:null-global} within each stratum and sum over independent strata. A stratum with $n_h=1$ contributes no variance.
\end{proof}

\begin{remark}[Interpretation for calibration]\label{rem:null}
Three consequences follow. First, permutation calibration of $P$ is conditional on $\Abar$: only the recognition branch varies under the null. Second, null dispersion of $r_u$, and therefore of $S_u$, grows with the squared coefficient of variation $\sigma_x^2/\bar x^2$ and approximately with $1/n_u$. Heavy-tailed citation distributions and small groups thus produce wide nulls in which departures of fixed size are harder to resolve. Third, stratification removes between-stratum citation variation from the null and recentres $s_u$ on the stratum-adjusted expectation. With few documents per stratum the attainable variation is small, and strata with $n_h=1$ or with a single group contribute none. The structural size of the between-group reassignment space is summarised by
\[
\log\Omega_{\mathrm{group}}=\sum_h\Bigl[\log(n_h!)-\sum_u\log(n_{hu}!)\Bigr],
\]
which counts label-distinct assignments. Because citation weights repeat, it overstates the number of distinct values of the statistics.
\end{remark}

% =====================================================================
\section{Boundary conventions and computation}\label{sec:computation}
% =====================================================================

The mathematical object and its implementation coincide under the following conventions.
\begin{enumerate}
\item If $N_R=0$: $\kappa=0$ and $\HD=\HC=\KCDI_\lambda=P=\Abar=P_S=\KJI_\lambda=0$; shares are not evaluated.
\item If $N_R>0$ and $C_R=0$: compute $p$, $\HD$, $(A_u)$ and $\Abar$ normally; set $\HC=0$ and $S_u=0$ for all $u$, so that $P=P_S=0$. KCDI follows Definition~\ref{def:kcdi}, and is zero for interior $\lambda$. These zeros encode the absence of citation information, not maximal concentration.
\item In all cases, $0\ln 0=0$; $S_u=0$ for $p_u=0$ without evaluating $s_u/p_u$; and the endpoint cases of $\lambda$ are evaluated directly, never as $0^0$.
\end{enumerate}

\noindent\textbf{Algorithm.} Input: labels $g_i\in U\cup\{\text{audit}\}$ for $d_i\in D$, weights $x_i\geq 0$, $\lambda\in[0,1]$.
\begin{enumerate}
\item Restrict to $D_R$; compute $N_R$, $\kappa$, $n_u$, $c_u$ and $C_R$.
\item Apply convention 1 or 2 if relevant; otherwise compute $p$ and $s$.
\item Compute $\HD$, $\HC$ and $\KCDI_\lambda$.
\item Compute $A_u$, $r_u$ and $S_u$; then $\Abar$, $P$, $P_S$ and, if $\Abar>0$, $Q$.
\item Compute $\KJI_\lambda$. Assert, up to numerical tolerance, $P\leq\min(\Abar,P_S)$, $P\geq\Abar+P_S-1$ and $\KJI_\lambda\leq\KCDI_\lambda$.
\end{enumerate}
The assertions in step 5 are invariants of the definitions (Theorems~\ref{thm:frechet} and~\ref{thm:kji}), not empirical findings. They serve as regression tests for implementations.

% =====================================================================
\section{Scope}\label{sec:scope}
% =====================================================================

The results characterise a finite-group functional of two distributions. They do not justify the choice of groups, the uniform reference or the use of citation weight as a proxy for recognition. Each of these is a declared measurement decision, and altering it defines a different specification. The components discard document-level citation structure. By Proposition~\ref{prop:rate}, recognition alignment reduces to a comparison of group means, so it inherits the sensitivity of means to extreme values. KJI is a conjunctive composite of dependent components, not a decomposition into independent sources. Its interpretation, including any substantive reading of differences between corpora, belongs to empirical work such as \citet{gonzalezbarbado2026kratos}.

% =====================================================================
\appendix
\section{Numerical verification}\label{app:verify}
% =====================================================================

The ancillary file \texttt{anc/verify\_kratos\_properties.py} (Python, NumPy) checks the stated results. The checks cover:
\begin{enumerate}
\item every inequality and identity of Sections~\ref{sec:kcdi}--\ref{sec:kji}, evaluated on $20{,}000$ random configurations for each $G\in\{2,3,4,6,10\}$, with Dirichlet draws of varying concentration and randomly emptied groups;
\item the counterexample of Proposition~\ref{prop:nonmonotone};
\item the moments of Propositions~\ref{prop:null-global} and~\ref{prop:null-strat}, against $40{,}000$ Monte Carlo permutations of heavy-tailed (Pareto) weights.
\end{enumerate}
No violation of any deterministic result was found. The simulated null moments agree with the closed forms up to Monte Carlo error; Table~\ref{tab:null} reports the global case. The script uses the master seed $20260831$.

\begin{table}[h]
\centering
\caption{Global permutation null: simulated and closed-form $\Var(r_u)$ ($N_R=200$, $G=4$, Pareto weights, $B=40{,}000$).}\label{tab:null}
\begin{tabular}{lcccc}
\toprule
 & $u=1$ & $u=2$ & $u=3$ & $u=4$\\
\midrule
Simulation & 0.1734 & 0.2364 & 0.3874 & 0.3279\\
Proposition~\ref{prop:null-global} & 0.1735 & 0.2363 & 0.3851 & 0.3288\\
\bottomrule
\end{tabular}
\end{table}

\section*{Code availability}
The reference implementation of KRATOS v2.3 is archived at \url{https://doi.org/10.5281/zenodo.23082982} (concept DOI \url{https://doi.org/10.5281/zenodo.22749724}) and developed at \url{https://github.com/missmew33/kratos-streamlit-app}.

\section*{Author contributions}
M.\,D.\ González Barbado: conceptualisation, formal analysis, writing (original draft, review and editing). A.\ Barbado González: formal analysis, software, verification, writing (review and editing).

\bibliographystyle{plainnat}
\bibliography{kratos_math}
\end{document}